\def\BibTeX{{\rm B\kern-.05em{\sc i\kern-.025em b}\kern-.08em
    T\kern-.1667em\lower.7ex\hbox{E}\kern-.125emX}}
\newtheorem{theorem}{Theorem}
\theoremstyle{definition}
\newtheorem{definition}{Definition}
\begin{document}

\title{A Bayesian Incentive Mechanism for Poison-Resilient Federated Learning}

\author{
  \IEEEauthorblockN{Daniel Commey\IEEEauthorrefmark{1},
                    Rebecca A. Sarpong\IEEEauthorrefmark{2},
                    Griffith S. Klogo\IEEEauthorrefmark{3},
                    Winful Bagyl-Bac\IEEEauthorrefmark{4}, and
                    Garth V. Crosby\IEEEauthorrefmark{5}}
  \IEEEauthorblockA{\IEEEauthorrefmark{1}Department of Multidisciplinary Engineering,
                    Texas A\&M University, College Station, TX, USA\\
                    dcommey@tamu.edu}
  \IEEEauthorblockA{\IEEEauthorrefmark{2}Department of Statistics and Actuarial Science,
                    KNUST, Kumasi, Ghana\\
                    rasarpong5@st.knust.edu.gh}
  \IEEEauthorblockA{\IEEEauthorrefmark{3}Department of Computer Engineering,
                    KNUST, Kumasi, Ghana\\
                    gsklogo.coe@knust.edu.gh}
  \IEEEauthorblockA{\IEEEauthorrefmark{4}Department of Computer Science, 
                    George Washington University, Washington, DC, USA\\
                    winful.bagylbac@gwu.edu}
  \IEEEauthorblockA{\IEEEauthorrefmark{5}Department of Engineering Technology and Industrial Distribution,
                    Texas A\&M University, College Station, TX, USA\\
                    gvcrosby@tamu.edu}
}

\maketitle

\begin{abstract}
Federated learning (FL) enables collaborative model training across decentralized clients while preserving data privacy. However, its open-participation nature exposes it to data-poisoning attacks, in which malicious actors submit corrupted model updates to degrade the global model. Existing defenses are often \emph{reactive}, relying on statistical aggregation rules that can be computationally expensive and that typically assume an honest majority. This paper introduces a \emph{proactive}, economic defense: a lightweight Bayesian incentive mechanism that makes malicious behavior economically irrational. Each training round is modeled as a Bayesian game of incomplete information in which the server, acting as the principal, uses a small, private validation dataset to verify update quality before issuing payments. The design satisfies Individual Rationality (IR) for benevolent clients, ensuring their participation is profitable, and Incentive Compatibility (IC), making poisoning an economically dominated strategy. Extensive experiments on non-IID partitions of MNIST and FashionMNIST demonstrate robustness: with 50\,\% label-flipping adversaries on MNIST, the mechanism maintains 96.7\,\% accuracy, only 0.3 percentage points lower than in a scenario with 30\,\% label-flipping adversaries. This outcome is 51.7 percentage points better than standard FedAvg, which collapses under the same 50\,\% attack. The mechanism is computationally light, budget-bounded, and readily integrates into existing FL frameworks, offering a practical route to economically robust and sustainable FL ecosystems.
\end{abstract}

\begin{IEEEkeywords}
Federated learning, mechanism design, Bayesian games, data poisoning, incentive compatibility, robust aggregation.
\end{IEEEkeywords}

\section{Introduction}
\IEEEPARstart{F}{ederated} learning (FL) has emerged as a key paradigm for privacy-preserving machine learning, allowing multiple parties to train a shared model without centralizing their raw data \cite{mcmahan_communication-efficient_2017, li_federated_2020}. While promising for sensitive applications such as healthcare \cite{commey_securing_2024}, its distributed and open nature creates a significant vulnerability: data poisoning attacks \cite{bhagoji_analyzing_2019, bagdasaryan_how_2020}. Malicious participants can intentionally submit corrupted model updates to degrade the global model's performance or introduce targeted backdoors \cite{bhagoji_analyzing_2019, bagdasaryan_how_2020}. The breadth and severity of these vulnerabilities are well-documented in recent surveys \cite{lyu_threats_2020}.

The predominant line of defense has been the development of Byzantine-robust aggregation rules. Methods like Krum \cite{blanchard_machine_2017}, Trimmed Mean \cite{yin_byzantine-robust_2018}, and geometric median-based approaches like RFA \cite{pillutla_robust_2022} aim to filter or down-weight malicious updates at the server. However, these methods are fundamentally \emph{reactive}. They often require strong assumptions (e.g., an honest majority), can be computationally intensive, and may discard valuable information from honest clients, thereby slowing convergence. More critically, they fail to address the underlying economic misalignment: honest clients who contribute valuable resources (computation, data, communication) are treated no differently from attackers who seek to sabotage the system.

This economic imbalance threatens the long-term sustainability of open FL ecosystems. If honest participation is not properly incentivized and malicious behavior is not penalized, the system becomes prone to collapse. This leads us to our research question: \emph{Can we design an FL system where participants' economic incentives are aligned with the goal of training a high-quality model, making poisoning attacks unprofitable at equilibrium?}

In this work, we draw upon the principles of mechanism design and game theory to provide an affirmative answer. We propose a proactive, lightweight Bayesian incentive mechanism that shifts the defense from a purely algorithmic problem to a socio-economic one.

\textbf{Contributions.} Our main contributions are as follows:
\begin{itemize}
    \item We formulate the FL training process as a repeated Bayesian game of incomplete information, formally capturing the strategic decisions of clients who can be either benevolent or malicious.
    \item We design a simple yet powerful incentive mechanism where the server uses a small, private validation set to assess the quality of submitted updates. Based on this verification, it issues rewards, effectively creating a market for high-quality model contributions.
    \item We provide formal proofs demonstrating that our mechanism is \textbf{Individually Rational (IR)}, ensuring benevolent clients have a positive expected utility, and \textbf{Incentive Compatible (IC)}, making poisoning an economically dominated strategy for rational attackers.
    \item We conduct extensive experiments on the MNIST and FashionMNIST datasets with non-IID data distributions. Our mechanism demonstrates exceptional robustness, maintaining high accuracy (over 96\% on MNIST and 80\% on FashionMNIST) even when 50\% of clients are malicious—a scenario where standard FedAvg's accuracy catastrophically collapses.
\end{itemize}

\section{Related Work}
\subsection{Robust Aggregation in Federated Learning}
The primary defense against poisoning in FL has centered on Byzantine-robust aggregation. These server-side methods aim to identify and mitigate the impact of malicious updates during the aggregation phase.

\textbf{Federated Averaging (FedAvg)} \cite{mcmahan_communication-efficient_2017} is the standard, non-robust baseline. The server aggregates updates by taking a weighted average of the model parameters from participating clients. While simple and effective in non-adversarial settings, it is highly susceptible to even a single malicious client.

\textbf{Byzantine-Robust Methods} have been developed to counter this vulnerability.
\textbf{Krum} \cite{blanchard_machine_2017} computes a score for each client update based on its sum of squared Euclidean distances to its nearest neighbors and selects only the single update with the lowest score. This is robust but highly inefficient, as it discards the contributions of all other honest clients.
\textbf{Coordinate-wise methods} like Trimmed Mean and Median \cite{yin_byzantine-robust_2018} compute the median or a trimmed mean for each coordinate of the model-weight vectors across all clients. These are robust to extreme values but can be distorted by more subtle attacks.
\textbf{Geometric median-based methods} like RFA \cite{pillutla_robust_2022} compute the geometric median of the client updates, which is more robust to high-dimensional outliers than the arithmetic mean but is computationally expensive.
\textbf{Trusted-source methods} like FLTrust \cite{cao_fltrust_2022} require the server to have a small, clean "root" dataset. The server trains a baseline update on this set and re-weights client updates based on their cosine similarity to this trusted update.
Another line of work uses redundancy and coding theory, such as \textbf{DRACO} \cite{chen_draco_2018}, which uses coded computations to detect and correct errors from stragglers or Byzantine workers, though this often requires significant overhead.

Our work is distinct from these approaches. Instead of relying on statistical properties or a trusted data source, we use a performance-based economic filter that is agnostic to the attack's specific structure.

\subsection{Incentive Mechanisms for FL}
Recognizing the need to motivate participation, researchers have explored economic incentives for FL. These works primarily focus on encouraging high-quality participation from rational, self-interested clients.
Reputation-based systems \cite{kang_incentive_2019} and contract theory \cite{zhan_learning-based_2020} have been proposed to model the contributions of clients and offer tailored rewards. Auction theory has also been applied, for example, in \cite{zhang_incentive_2021}, where the server runs an auction to select clients with the best data quality for a given budget. Some works use blockchain to create decentralized and transparent reward systems, like FedCoin \cite{yang_fedcoin_2020}.

Our work's novelty lies in its direct focus on the security dimension of incentives. We design a mechanism with formal game-theoretic guarantees (IR and IC) to not only encourage honest participation but to actively and provably \emph{discourage} poisoning attacks by making them economically non-viable. The work closest in spirit is perhaps VeriFL \cite{guo_verifl_2021}, which also uses a validation set, but its goal is post-hoc verification and attribution rather than proactive, in-round economic deterrence.

\section{System and Threat Model}

\subsection{System Model}
We consider a standard synchronous FL architecture comprising a central server and a population of $N$ clients. Training proceeds in discrete communication rounds. In each round $t$, the server broadcasts the current global model, $w_t$, to a subset of clients. These clients train the model on their local data and submit their updated model parameters, $w_{i, t+1}$, back to the server. The server then aggregates these updates to produce the next global model, $w_{t+1}$. Our key innovation lies in the verification and payment logic applied before aggregation, as depicted in Figure \ref{fig:system_overview}.

\begin{figure}[!htbp]
    \centering
    \includegraphics[width=\columnwidth]{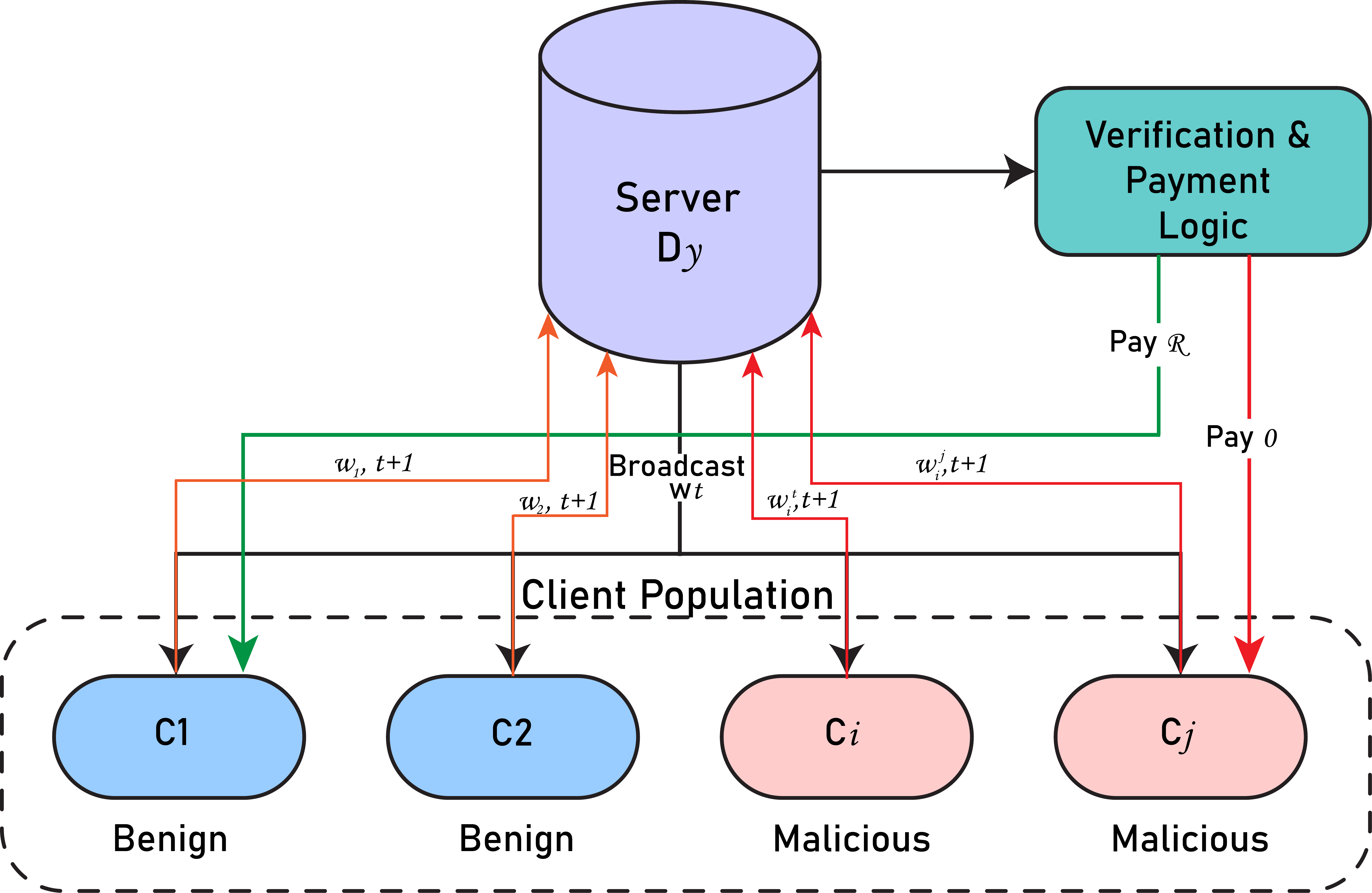}
    \caption{System architecture with the proposed incentive mechanism. The server broadcasts the global model $w_t$. Benign clients (C1, C2) and malicious clients (Ci, Cj) submit their updates. The server uses a private validation set ($D_y$) to assess update quality. High-quality updates are accepted and paid a reward $\mathcal{R}$, while malicious updates are rejected and receive no payment. Only verified updates are used for aggregation.}
    \label{fig:system_overview}
\end{figure}

For our experiments, we use a population of 100 clients. Data from the MNIST and FashionMNIST datasets is partitioned among clients in a non-IID fashion using a Dirichlet distribution with $\alpha=0.5$ to simulate realistic data heterogeneity.

\subsection{Threat Model}
We assume an honest-but-curious server that faithfully executes the protocol but may try to infer information from client updates. A fraction $f$ of the clients are malicious, while the remaining $1-f$ are benevolent. The server has incomplete information; it knows the overall fraction $f$ but does not know the type of any individual client a priori.

The malicious clients aim to degrade the global model's performance on the primary task. To this end, they employ a \textbf{label-flipping} attack, a potent form of data poisoning \cite{bhagoji_analyzing_2019}. During local training, a malicious client maps each true label $y$ to a target label $y'$, effectively training its model on deliberately mislabeled data. For our 10-class datasets, we use the mapping $y' = (y + k) \bmod 10$, where $k$ is an offset (we use $k=1$ for our attacks). This process, illustrated in Figure \ref{fig:threat_model}, forces the client's local model to learn incorrect associations, and when aggregated, these poisoned updates corrupt the global model. More sophisticated attacks, such as targeted backdooring \cite{bagdasaryan_how_2020}, follow a similar principle of manipulating local training to achieve a malicious objective.

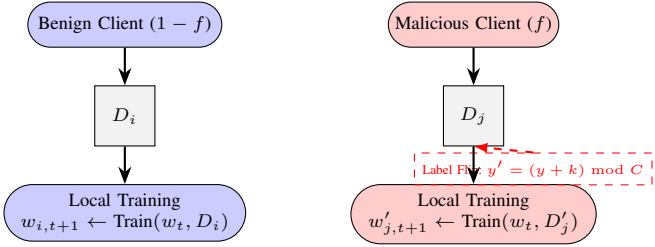
\begin{figure}[!htbp]
    \centering
    \begin{tikzpicture}[
      font=\scriptsize,
      node distance=0.5cm and 0.3cm,
      server/.style={draw, cylinder, shape border rotate=90, minimum height=1.8cm, minimum width=1.2cm, fill=blue!10, align=center},
      client/.style={draw, rounded rectangle, minimum width=1.2cm, minimum height=0.6cm, align=center},
      mal/.style={client, fill=red!10},
      arrow/.style={-Stealth, thick},
      dataflow/.style={arrow, shorten >=1pt, shorten <=1pt}
    ]
    
    \node[client, fill=blue!20, minimum width=2.5cm] (benign) {Benign Client ($1-f$)};
    \node[mal, fill=red!20, right=2.2cm of benign, minimum width=2.5cm] (malicious) {Malicious Client ($f$)};

    \node[draw, rectangle, fill=gray!10, minimum size=0.8cm, below=0.5cm of benign] (data_b) {$D_i$};
    \node[draw, rectangle, fill=gray!10, minimum size=0.8cm, below=0.5cm of malicious] (data_m) {$D_j$};

    \node[client, fill=blue!20, below=0.5cm of data_b, minimum width=2.5cm] (train_b) {Local Training\\$w_{i,t+1} \leftarrow \text{Train}(w_t, D_i)$};
    \node[mal, fill=red!20, below=0.5cm of data_m, minimum width=2.5cm] (train_m) {Local Training\\$w'_{j,t+1} \leftarrow \text{Train}(w_t, D'_j)$};

    \node[draw, dashed, red, inner sep=3pt, below right=0.08cm and -1.2cm of data_m, font=\tiny] (flip_note) {Label Flip: $y'=(y+k)\bmod C$};
    \draw[-latex, red, dashed, thick] (flip_note.north) -- (data_m.south);

    \draw[arrow] (benign) -- (data_b);
    \draw[arrow] (malicious) -- (data_m);
    \draw[arrow] (data_b) -- (train_b);
    \draw[arrow] (data_m) -- (train_m);
    \end{tikzpicture}
    \caption{The threat model. A fraction $f$ of clients are malicious. They poison their local dataset $D_j$ by flipping labels to create $D'_j$ before training their local model. Benign clients train on their original, clean data $D_i$.}
    \label{fig:threat_model}
\end{figure}

\section{Bayesian Incentive Mechanism}

\subsection{Game Formulation}
We model each round of federated training as a Bayesian game of incomplete information, defined by the tuple $\Gamma = \langle \mathcal{N}, \{\Theta_i\}, \{A_i\}, \{u_i\}, p \rangle$:
\begin{itemize}
    \item $\mathcal{N}$: The set of players, consisting of the server and $N$ clients.
    \item $\Theta_i$: The set of types for each client $i$, $\Theta_i = \{\text{benevolent}, \text{malicious}\}$. A client's type is private information.
    \item $A_i$: The action space for client $i$. A client chooses an action $a_i \in \{\text{honest\_update}, \text{poisoned\_update}\}$.
    \item $p(\theta_i)$: The server's prior belief about the probability that client $i$ is of a certain type. We assume a common prior where $P(\theta_i=\text{malicious}) = f$.
    \item $u_i$: The utility function for client $i$. The utility is determined by the payment received from the server, $p_i$, minus the operational cost incurred, $C_i$. Thus, $u_i = p_i - C_i$. We assume a uniform cost $C$ for all clients for simplicity.
\end{itemize}

\subsection{The Verification and Payment Mechanism}
The core of our defense is a direct revelation mechanism where the server incentivizes clients to reveal their "true" contribution quality.

\begin{definition}[Verification Mechanism]
The server holds a small, private, and clean validation dataset, $D_y$. Upon receiving a model update $w_{i, t+1}$ from client $i$, the server evaluates its loss on this set: $L_i = \mathcal{L}(w_{i, t+1}; D_y)$. The update is considered "verified" if its loss is below a predefined quality threshold $\tau$. The payment rule is:
\begin{equation}
p_i(w_{i, t+1}) = \begin{cases} 
\mathcal{R} & \text{if } \mathcal{L}(w_{i, t+1}; D_y) < \tau \\
0 & \text{otherwise}
\end{cases}
\label{eq:payment_rule}
\end{equation}
where $\mathcal{R}$ is a fixed reward for a verified update. Only the set of verified updates, $\mathcal{V}_t$, are aggregated to form the next global model:
\begin{equation}
w_{t+1} = \frac{1}{|\mathcal{V}_t|} \sum_{w_j \in \mathcal{V}_t} w_j
\end{equation}
\end{definition}

This mechanism is lightweight, as it only requires a single forward pass on a small validation set for each client, a negligible cost compared to the training itself. The logic is outlined in Algorithm \ref{alg:mechanism}.

\subsection{Mechanism Properties}
A robust mechanism must make it profitable for honest players to participate and unprofitable for malicious players to attack. These correspond to the game-theoretic properties of Individual Rationality (IR) and Incentive Compatibility (IC).

\begin{theorem}[Individual Rationality]
For a benevolent client choosing the honest action, the expected utility is positive if the reward $\mathcal{R}$ and cost $C$ are set such that $\mathcal{R} > C / P(\text{verification} | \text{honest})$.
\end{theorem}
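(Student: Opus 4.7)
The plan is to unfold the definition of utility and compute the expectation directly. First, I would consider a benevolent client $i$ that plays the honest action. The deterministic cost $C$ is incurred from local training regardless of the outcome, and by the payment rule in Equation~\eqref{eq:payment_rule} the payment $p_i$ is a Bernoulli random variable taking value $\mathcal{R}$ on the event $\{\mathcal{L}(w_{i,t+1}; D_y) < \tau\}$ and $0$ otherwise.

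Second, let $q := P(\mathcal{L}(w_{i,t+1}; D_y) < \tau \mid \theta_i = \text{benevolent}, \text{honest action})$, which matches $P(\text{verification} \mid \text{honest})$ in the theorem statement. Taking expectations of $u_i = p_i - C$ over the randomness of the validation outcome gives $\mathbb{E}[u_i] = q \mathcal{R} - C$. Invoking the hypothesis $\mathcal{R} > C/q$ and multiplying through by $q > 0$ yields $q\mathcal{R} > C$, hence $\mathbb{E}[u_i] > 0$, which is precisely individual rationality.

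The algebra itself is one line; the only conceptual content is ensuring that the denominator $q$ is bounded away from zero so that the inequality $\mathcal{R} > C/q$ is not vacuous. This is a modeling step rather than a computational one, and I expect it to be the main obstacle. Because a benevolent client's local dataset $D_i$ is drawn from the same task distribution as the server's holdout $D_y$, an honestly trained update $w_{i,t+1}$ should generalize to $D_y$, and with a threshold $\tau$ calibrated to the typical validation loss of honest updates (for example, slightly above a running average of accepted losses), $q$ can be kept close to one. I would flag this calibration requirement on $\tau$ as an accompanying design constraint rather than attempt a concrete distributional lower bound on $q$, which would depend on specifics of the data heterogeneity and learning algorithm that lie outside the scope of the theorem.
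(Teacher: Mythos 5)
Your proposal is correct and follows essentially the same route as the paper's own proof: both compute the expected utility as $P(\text{verification}\mid\text{honest})\cdot\mathcal{R} - C$ and observe that the stated condition $\mathcal{R} > C/P(\text{verification}\mid\text{honest})$ is exactly equivalent to positivity (the paper derives the condition from the positivity requirement, you invert the hypothesis --- logically the same step). Your closing discussion of why $q$ is bounded away from zero mirrors the paper's informal claim that $P_v^h \approx 1$ for a well-calibrated $\tau$, though you more carefully flag it as a design constraint rather than asserting it outright.
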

\begin{proof}
A benevolent client's action is to submit an honestly trained update. Let this action be $a_h$. The client's update will be verified if its loss on $D_y$ is less than $\tau$. Let the probability of this event be $P_v^h = P(\mathcal{L}(w_{honest}) < \tau)$. The expected utility for a benevolent client is:
\begin{equation}
\mathbb{E}[u_i | \theta_i=\text{benevolent}, a_i=a_h] = P_v^h \cdot \mathcal{R} + (1 - P_v^h) \cdot 0 - C
\end{equation}
For participation to be rational, this expected utility must be greater than 0 (the utility of not participating).
\begin{equation}
P_v^h \cdot \mathcal{R} - C > 0 \implies \mathcal{R} > \frac{C}{P_v^h}
\end{equation}
Since an honest update is designed to minimize the loss, its loss on a clean validation set will be low. Therefore, for a reasonably set $\tau$, $P_v^h \approx 1$. With our parameters $\mathcal{R}=10$ and $C=2$, the condition becomes $10 > 2/P_v^h$, which holds easily. Thus, honest participation is economically rational.
\end{proof}

\begin{theorem}[Incentive Compatibility]
For a rational, self-interested client, choosing a poisoned action is an economically dominated strategy if the attack significantly increases the model's loss on a clean validation set.
\end{theorem}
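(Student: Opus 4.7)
The plan is to mirror the structure of the IR proof: write down the expected utility of the poisoned action, identify the verification probability of a poisoned update as the decisive quantity, and then exploit the hypothesis that poisoning inflates the validation loss to conclude that this probability is essentially zero. Concretely, I would first let $a_p$ denote the poisoning action and define $P_v^m = P(\mathcal{L}(w_{\text{poisoned}};D_y) < \tau)$, the probability that a poisoned update slips past the threshold. Because the cost $C$ is paid regardless of the submitted content (the client must still run local training to produce a plausible-looking update), the expected utility becomes
\begin{equation}
\mathbb{E}[u_i \mid \theta_i=\text{malicious}, a_i=a_p] = P_v^m \cdot \mathcal{R} - C.
\end{equation}

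Next I would contrast this with the two relevant alternatives: opting out (utility $0$) and submitting an honest update (utility $P_v^h \cdot \mathcal{R} - C$, from the IR proof). The formal claim of domination would then be $P_v^m \mathcal{R} - C < \max\{0,\, P_v^h \mathcal{R} - C\}$. I would argue the first inequality by invoking the theorem's hypothesis: since a label-flipped model is trained against the true data-generating distribution, its loss on the clean set $D_y$ is substantially larger than that of an honest model, so for any threshold $\tau$ chosen near the honest loss level we have $P_v^m \approx 0$. Plugging in the paper's parameters $\mathcal{R}=10$, $C=2$ gives expected utility $\approx -2 < 0$, so opting out already dominates poisoning, and honest participation (utility $\approx 8$) dominates by an even wider margin. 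Because this holds regardless of the server's posterior belief about any individual client's type, poisoning is strictly dominated in the Bayesian game, yielding the IC conclusion.

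The main obstacle is making the ``significantly increases the loss'' clause precise without having to characterize every possible attack. I would address this by treating it as a mild separation assumption on the threshold: there exists $\tau$ with $\tau > \mathbb{E}[\mathcal{L}(w_{\text{honest}};D_y)]$ but $\tau < \mathbb{E}[\mathcal{L}(w_{\text{poisoned}};D_y)] - \delta$ for some margin $\delta>0$, and then appeal to a concentration argument (e.g.\ a one-line Markov or Hoeffding bound on the loss around its mean for a sufficiently large $|D_y|$) to conclude $P_v^m \le C/\mathcal{R}$, which is exactly what is needed. A secondary subtlety is that the malicious client could in principle randomize or strategically withhold effort; I would briefly note that any such deviation either raises the loss further (reducing $P_v^m$) or mimics the honest action (in which case the attack objective is abandoned), so neither route breaks the dominance argument. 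The rest is arithmetic and can be kept to a couple of lines in the written proof.
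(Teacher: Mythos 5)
Your proposal follows essentially the same route as the paper's own proof: define the verification probability $P_v^m$ for a poisoned update, write the expected utility $P_v^m \mathcal{R} - C$, use the hypothesis that poisoning inflates the clean-validation loss to conclude $P_v^m \approx 0$, and then show the resulting negative utility is dominated by both opting out and honest participation. Your additions (the explicit threshold-separation assumption with a concentration bound, and the remark on randomized deviations) make the argument tighter than the paper's informal treatment, but they refine rather than change the approach, so the proposal is correct and matches the paper.
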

\begin{proof}
A malicious client's goal is to submit a poisoned update, $a_p$, to degrade the model. This action inherently increases the model's true loss. Let the probability of a poisoned update passing verification be $P_v^m = P(\mathcal{L}(w_{poisoned}) < \tau)$. The expected utility for taking the poisoned action is:
\begin{equation}
\mathbb{E}[u_i | a_i=a_p] = P_v^m \cdot \mathcal{R} - C
\end{equation}
For a label-flipping attack, the resulting model will perform poorly on the correctly labeled validation set $D_y$. Thus, its loss will be high, and for a reasonable $\tau$, the probability of verification will be near zero, $P_v^m \approx 0$. The expected utility becomes:
\begin{equation}
\mathbb{E}[u_i | a_i=a_p] \approx 0 \cdot \mathcal{R} - C = -C
\end{equation}
A rational agent will compare this negative utility to the utility of not participating (utility 0) or participating honestly (positive utility, from Theorem 1). Since $-C < 0$, the poisoning strategy is strictly dominated by non-participation. This demonstrates that the mechanism is incentive-compatible, as it disincentivizes the malicious action.
\end{proof}

\section{Experimental Setup}
\label{sec:setup}
We implemented our system in PyTorch and conducted experiments to evaluate its performance against standard baselines.

\begin{itemize}
    \item \textbf{Datasets:} We used two benchmark datasets: \textbf{MNIST} (handwritten digits) and \textbf{FashionMNIST} (apparel images). Both have 10 classes. Data was distributed among 100 clients using a Dirichlet distribution ($\alpha=0.5$) to simulate a non-IID environment.
    \item \textbf{Model:} A Convolutional Neural Network (CNN) with two convolutional layers (32 and 64 filters, 5x5 kernel), each followed by max-pooling, and two fully-connected layers (1024 units and 10 units for the output).
    \item \textbf{Training Protocol:} 40 communication rounds, 3 local epochs per round, batch size 32, and SGD with a learning rate of 0.01.
    \item \textbf{Attack Scenarios:} We tested with malicious client fractions ($f$) of 30\%, 40\%, and 50\%. Malicious clients performed a label-flipping attack ($y' = (y+1) \bmod 10$).
    \item \textbf{Baselines for Comparison:}
        \begin{itemize}
            \item \textbf{FedAvg:} The standard, non-robust federated averaging algorithm.
            \item \textbf{Krum:} A well-known Byzantine-robust aggregation rule that selects the single "best" update.
        \end{itemize}
    \item \textbf{Mechanism Parameters:} Reward $\mathcal{R}=10$, Cost $C=2$, and verification threshold $\tau=2.5$. The server's private validation set $D_y$ contained 200 randomly sampled examples.
\end{itemize}

The pseudocode for the server's logic in our proposed mechanism is detailed in Algorithm \ref{alg:mechanism}.

\begin{algorithm}[!htbp]
\caption{Federated Learning with Bayesian Incentive Mechanism (Server-Side Logic)}
\label{alg:mechanism}
\begin{algorithmic}[1]
\REQUIRE Reward $\mathcal{R}$, Cost $C$, Threshold $\tau$, Validation set $D_y$
\STATE Initialize global model $w_0$
\FOR{each communication round $t = 0, 1, \dots, T-1$}
    \STATE Broadcast global model $w_t$ to all clients
    \STATE $\mathcal{U}_t \leftarrow \emptyset$ \COMMENT{Collect all incoming updates}
    \FORALL{clients $i \in \{1, \dots, N\}$ \textbf{in parallel}}
        \STATE $w_{i, t+1} \leftarrow$ \textsc{ClientUpdate}$(w_t, D_i)$
        \STATE $\mathcal{U}_t \leftarrow \mathcal{U}_t \cup \{w_{i, t+1}\}$
    \ENDFOR
    
    \STATE $\mathcal{V}_t \leftarrow \emptyset$ \COMMENT{Set of verified updates}
    \FOR{each update $w_i \in \mathcal{U}_t$}
        \STATE $L_i \leftarrow$ \textsc{EvaluateLoss}$(w_i, D_y)$
        \IF{$L_i < \tau$}
            \STATE Pay $\mathcal{R}$ to client $i$
            \STATE $\mathcal{V}_t \leftarrow \mathcal{V}_t \cup \{w_i\}$
        \ELSE
        	\STATE Pay $0$ to client $i$ \COMMENT{Client incurs cost C}
        \ENDIF
    \ENDFOR
    
    \IF{$|\mathcal{V}_t| > 0$}
        \STATE $w_{t+1} \leftarrow \frac{1}{|\mathcal{V}_t|} \sum_{w \in \mathcal{V}_t} w$ \COMMENT{Aggregate verified updates}
    \ELSE
        \STATE $w_{t+1} \leftarrow w_t$ \COMMENT{No updates verified, maintain model}
    \ENDIF
\ENDFOR
\end{algorithmic}
\end{algorithm}

\section{Experimental Results}
Our experiments provide a comprehensive view of the mechanism's performance, robustness, and economic effects across both MNIST and FashionMNIST datasets. We compare our proposed Bayesian Incentive Mechanism (which we will refer to as "Mechanism") against standard FedAvg and the well-known robust aggregation rule, Krum.

\subsection{Overall Performance and Robustness}
We first present a high-level analysis of the mechanism's resilience to an increasing number of attackers. Figure \ref{fig:comprehensive_analysis} summarizes the key outcomes. The top-left panel shows the final test accuracy on MNIST as the fraction of malicious clients increases from 30\% to 50\%. While FedAvg's accuracy plummets from 95.3\% to 43.5\%, and Krum's performance degrades, our mechanism's accuracy remains exceptionally stable above 96.7\%. The top-right panel reinforces this, showing our mechanism achieving 97.0\% accuracy on MNIST and 80.3\% on FashionMNIST in the challenging 40\% malicious scenario, significantly outperforming both baselines.

The robustness of our approach is quantified in Table \ref{tab:robustness_analysis}. When the attacker population grows from 30\% to 50\%, FedAvg's accuracy suffers a catastrophic drop of 51.8 percentage points on MNIST and 45.3 points on FashionMNIST. Krum also proves vulnerable, especially on FashionMNIST, where its accuracy collapses. In stark contrast, our mechanism's performance degrades by a negligible 0.24 points on MNIST and 1.22 points on FashionMNIST, demonstrating that its economic filtering effectively insulates the global model from the number of attackers.

Finally, the bottom-right panel of Figure \ref{fig:comprehensive_analysis} validates our economic model. The average utility for an honest client quickly converges to the theoretical maximum of $\mathcal{R} - C = 8$, confirming that honest participation is consistently and profitably rewarded.

\begin{figure}[!htbp]
    \centering
    \includegraphics[width=\linewidth]{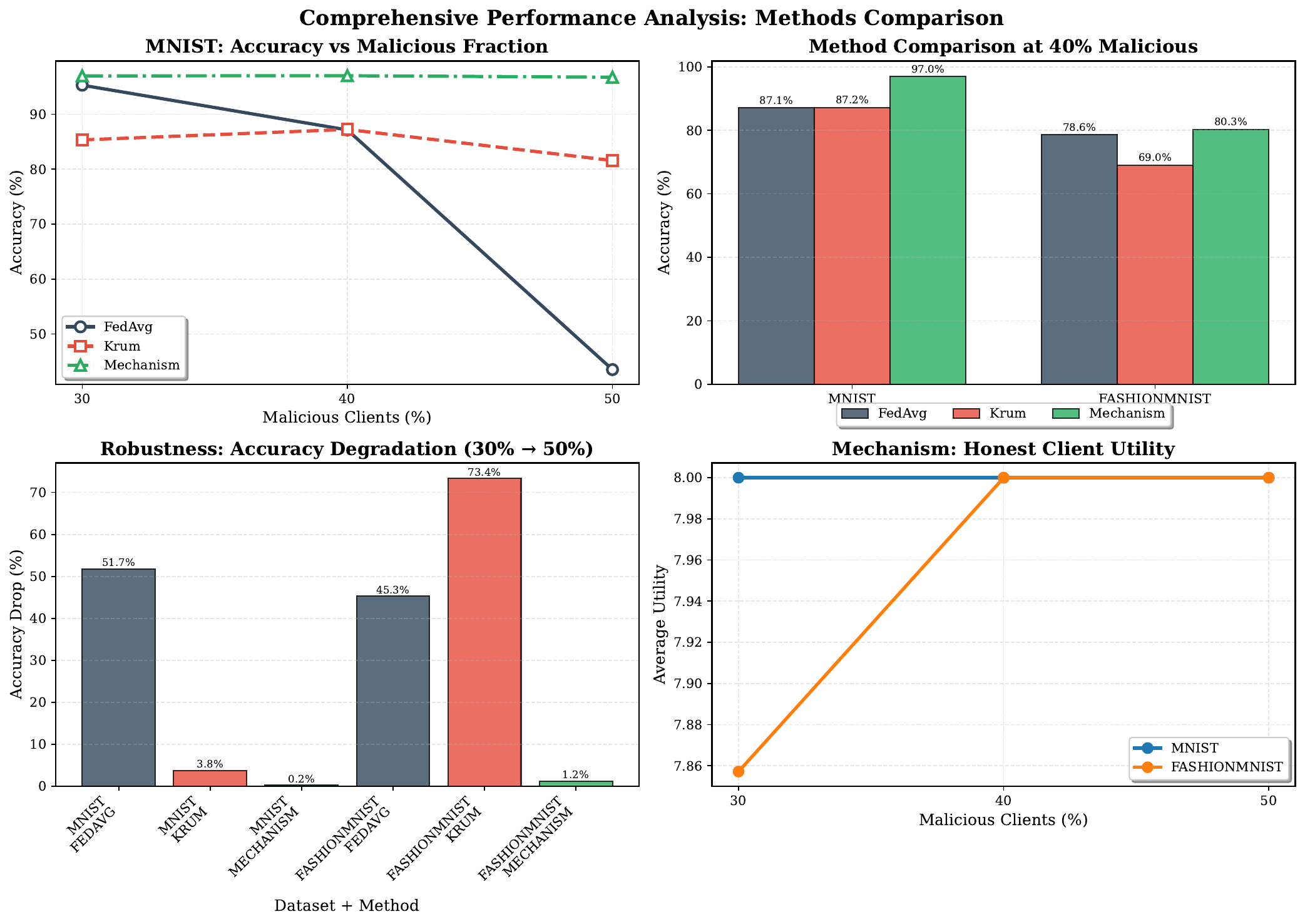} 
    \caption{Comprehensive performance analysis comparing FedAvg, Krum, and our proposed mechanism. (Top-Left) Final test accuracy on MNIST vs. the fraction of malicious clients. (Top-Right) A direct comparison of final accuracy at 40\% malicious for both MNIST and FashionMNIST. (Bottom-Left) The drop in accuracy when increasing malicious clients from 30\% to 50\%, highlighting robustness. (Bottom-Right) The average utility for honest clients participating in our mechanism, showing convergence to the theoretical maximum of 8.}
    \label{fig:comprehensive_analysis}
\end{figure}

\begin{table}[!htbp]
\centering
\caption{Robustness Analysis: Accuracy Degradation when Increasing Malicious Client Fraction from 30\% to 50\%.}
\label{tab:robustness_analysis}
\resizebox{\columnwidth}{!}{%
\begin{tabular}{@{}lcccc@{}}
\toprule
\textbf{Method} & \textbf{Acc. @ 30\%} & \textbf{Acc. @ 50\%} & \textbf{Degrad. (\% pts)} \\
\midrule
\multicolumn{4}{c}{\textbf{MNIST Dataset}} \\
\midrule
FedAvg & 95.27\% & 43.52\% & \textcolor{red}{51.75} \\
Krum & 85.31\% & 81.56\% & 3.75 \\
\textbf{Mechanism} & \textbf{96.96\%} & \textbf{96.72\%} & \textbf{\textcolor{blue}{0.24}} \\
\midrule
\multicolumn{4}{c}{\textbf{FashionMNIST Dataset}} \\
\midrule
FedAvg & 80.74\% & 35.44\% & \textcolor{red}{45.30} \\
Krum & 73.68\% & 0.33\% & \textcolor{red}{73.35} \\
\textbf{Mechanism} & \textbf{81.89\%} & \textbf{80.67\%} & \textbf{\textcolor{blue}{1.22}} \\
\bottomrule
\end{tabular}%
}
\end{table}

\subsection{Detailed Analysis on MNIST}
Figure \ref{fig:mnist_analysis} provides a detailed view of the training dynamics on MNIST. The final accuracy (top-left panel) remains consistently high for our mechanism, achieving 96.7\% even with 50\% attackers, as also detailed in Table \ref{tab:detailed_results}. This stability is a direct result of the economic filter. The training progress (bottom-right panel) for the 40\% attack scenario shows our mechanism's smooth and rapid convergence, while FedAvg is erratic and Krum is noisy.

From an economic perspective, the top-right panel shows that the average utility for honest clients is stable and positive, rapidly converging towards the ideal payoff of 8.0. This empirically validates Theorem 1 (Individual Rationality). Malicious clients, whose updates are consistently rejected, receive zero payment and incur the cost $C$, yielding a negative utility. This validates Theorem 2 (Incentive Compatibility), as attacking is an economically irrational choice. The total server expenditure (bottom-left panel) remains bounded and stable, demonstrating the economic sustainability of the system.

\begin{figure}[!htbp]
    \centering
    \includegraphics[width=\linewidth]{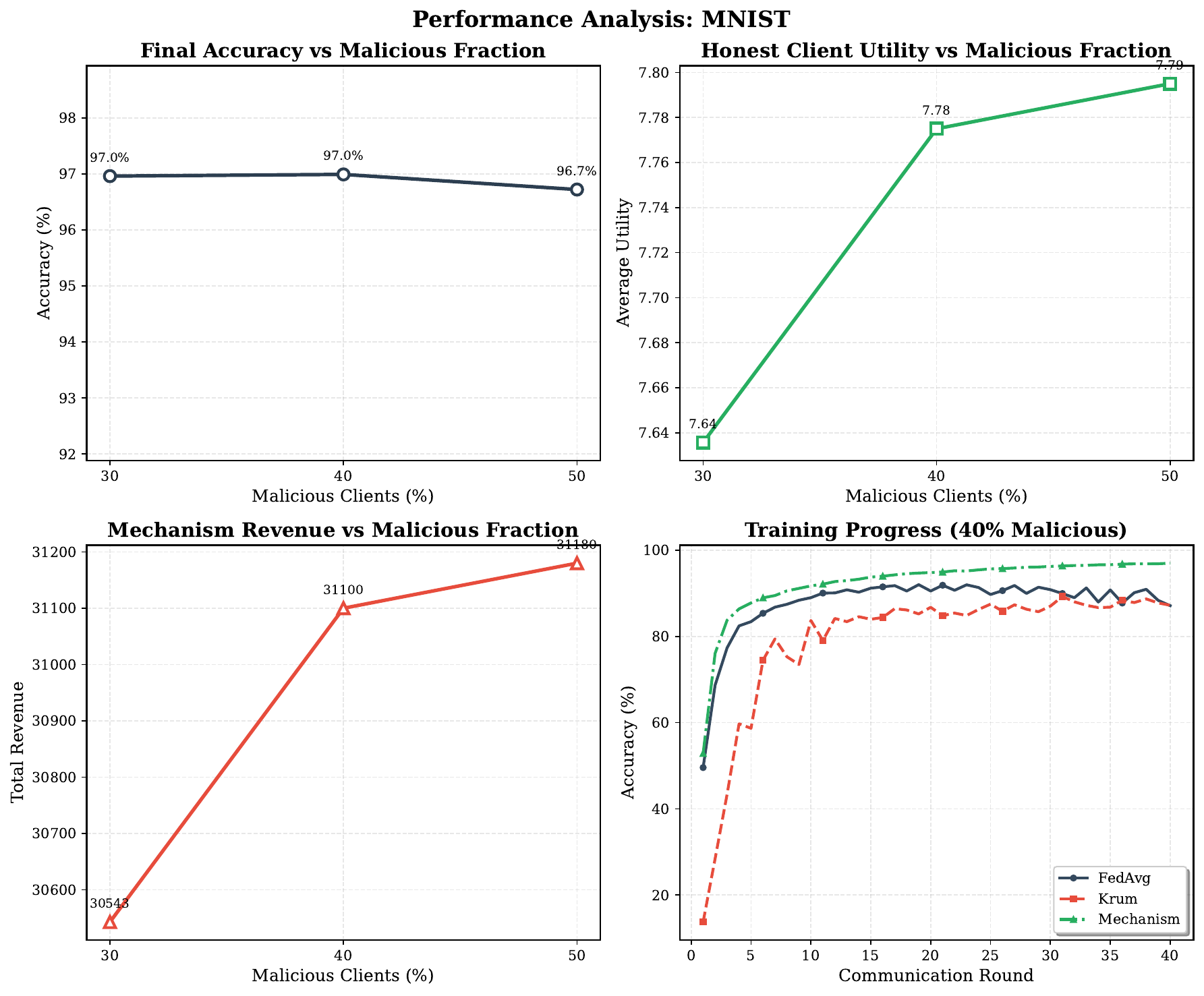} 
    \caption{Detailed performance and economic analysis on the MNIST dataset. (Top-Left) Final accuracy remains high and stable. (Top-Right) Average utility for honest clients is stable and positive. (Bottom-Left) Total server revenue remains bounded. (Bottom-Right) Training progress at 40\% malicious shows superior convergence.}
    \label{fig:mnist_analysis}
\end{figure}

\subsection{Detailed Analysis on FashionMNIST}
The results on the more challenging FashionMNIST dataset (Figure \ref{fig:fashionmnist_analysis}) further underscore our mechanism's strengths. While our mechanism maintains over 80\% accuracy across all attack levels, FedAvg's performance degrades sharply, and Krum fails catastrophically, with its accuracy dropping to near-random levels (0.33\% with 50\% attackers), highlighting the fragility of distance-based metrics on more complex tasks. These final accuracy numbers are compiled in Table \ref{tab:detailed_results}.

The economic outcomes are equally strong. The utility for honest clients (top-right panel) remains positive, ensuring participation is viable. The training curve at 40\% malicious (bottom-right panel) again confirms our mechanism's stability and superior convergence. The total server revenue (bottom-left panel) is stable, showing the mechanism is not just robust but also budget-conscious, as it avoids paying for low-quality or malicious contributions.

\begin{figure}[!htbp]
    \centering
    \includegraphics[width=\linewidth]{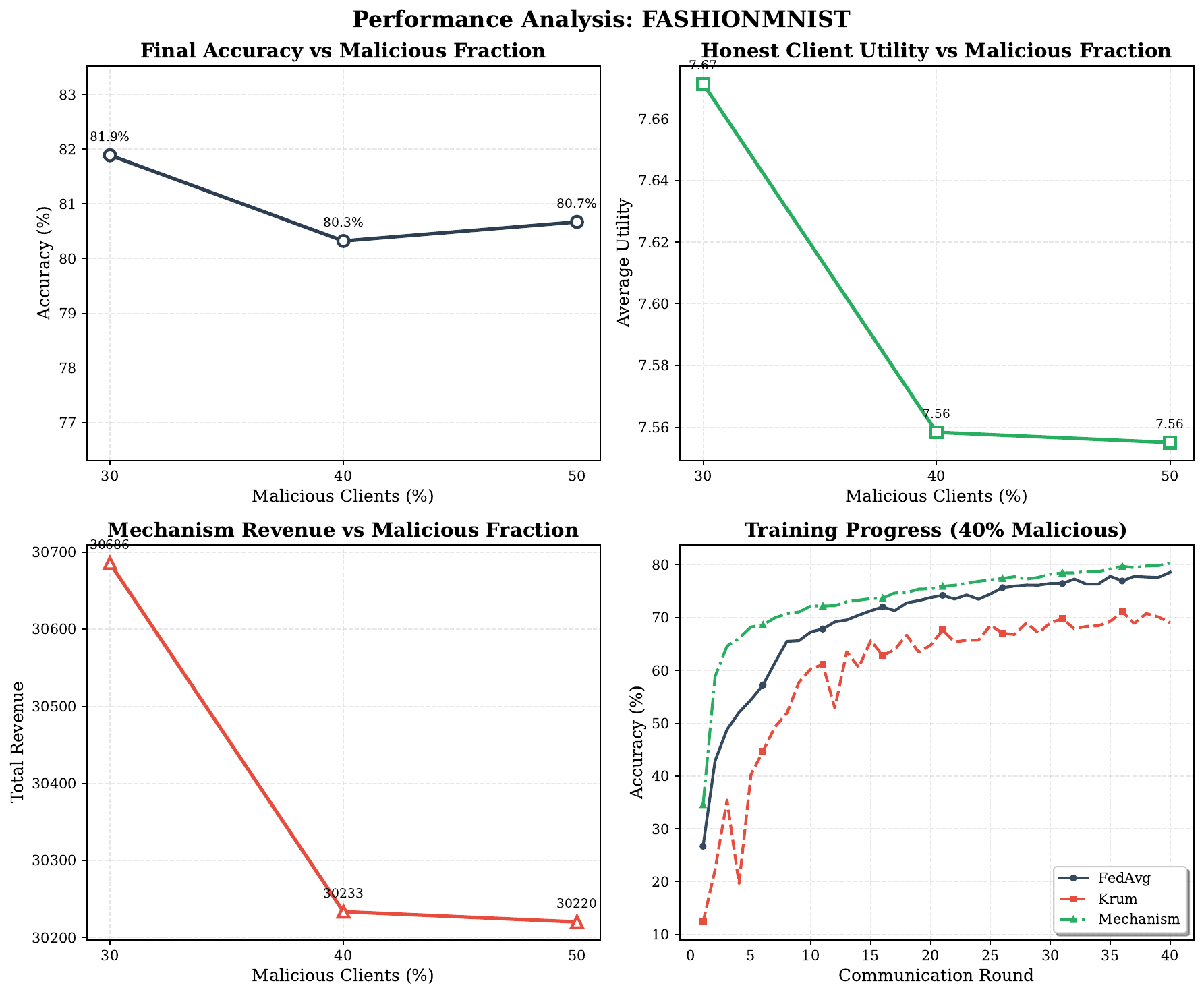} 
    \caption{Detailed performance and economic analysis on the FashionMNIST dataset. (Top-Left) Our mechanism maintains high accuracy while baselines fail. (Top-Right) Average utility for honest clients remains positive and viable. (Bottom-Left) Total server revenue shows controlled expenditure. (Bottom-Right) Training progress confirms the stability and effectiveness of our mechanism.}
    \label{fig:fashionmnist_analysis}
\end{figure}

\begin{table}[!htbp]
\centering
\caption{Detailed Final Performance and Economic Metrics.}
\label{tab:detailed_results}
\resizebox{\columnwidth}{!}{%
\begin{tabular}{@{}lllccc@{}}
\toprule
\textbf{Dataset} & \textbf{Malicious \%} & \textbf{Method} & \textbf{Final Acc. \%} & \textbf{Honest Utility} & \textbf{Total Revenue} \\
\midrule
\multirow{9}{*}{MNIST} & \multirow{3}{*}{30\%} & FedAvg & 95.27 & --- & --- \\
 & & Krum & 85.31 & --- & --- \\
 & & \textbf{Mechanism} & \textbf{96.96} & \textbf{7.64} & \textbf{30.5k} \\
 \cmidrule(lr){2-6}
 & \multirow{3}{*}{40\%} & FedAvg & 87.13 & --- & --- \\
 & & Krum & 87.22 & --- & --- \\
 & & \textbf{Mechanism} & \textbf{97.00} & \textbf{7.78} & \textbf{31.1k} \\
 \cmidrule(lr){2-6}
 & \multirow{3}{*}{50\%} & FedAvg & 43.52 & --- & --- \\
 & & Krum & 81.56 & --- & --- \\
 & & \textbf{Mechanism} & \textbf{96.70} & \textbf{7.79} & \textbf{31.2k} \\
\midrule
\multirow{9}{*}{FMNIST} & \multirow{3}{*}{30\%} & FedAvg & 80.74 & --- & --- \\
 & & Krum & 73.68 & --- & --- \\
 & & \textbf{Mechanism} & \textbf{81.90} & \textbf{7.67} & \textbf{30.7k} \\
 \cmidrule(lr){2-6}
 & \multirow{3}{*}{40\%} & FedAvg & 78.60 & --- & --- \\
 & & Krum & 69.03 & --- & --- \\
 & & \textbf{Mechanism} & \textbf{80.30} & \textbf{7.56} & \textbf{30.2k} \\
 \cmidrule(lr){2-6}
 & \multirow{3}{*}{50\%} & FedAvg & 35.44 & --- & --- \\
 & & Krum & 0.33 & --- & --- \\
 & & \textbf{Mechanism} & \textbf{80.70} & \textbf{7.56} & \textbf{30.2k} \\
\bottomrule
\end{tabular}%
}
\end{table}

\section{Conclusion}
In this paper, we introduced a game-theoretic incentive mechanism that provides a proactive, economic defense against data poisoning attacks in federated learning. By framing the FL process as a Bayesian game and implementing a simple, low-cost verification step, our mechanism successfully aligns client incentives with the global objective of training an accurate model.

Our extensive experiments on MNIST and FashionMNIST demonstrate the mechanism's remarkable effectiveness. It maintains high accuracy and stability even under extreme attack conditions (50\% malicious clients) where standard FedAvg fails completely and the popular robust aggregator Krum struggles or fails. Our robustness analysis (Table \ref{tab:robustness_analysis}) quantifies this resilience, showing negligible performance degradation under increased attack intensity. We formally proved, and empirically validated through detailed results (Figures \ref{fig:mnist_analysis} and \ref{fig:fashionmnist_analysis}, and Table \ref{tab:detailed_results}), that the mechanism is individually rational for honest participants and incentive-compatible for deterring attackers by making malicious behavior economically non-viable.

This work shows that shifting focus from purely algorithmic defenses to socio-economic ones is a powerful and practical strategy for building secure, robust, and sustainable federated learning systems. Future work could explore adaptive verification thresholds, extend the mechanism to defend against more subtle attack strategies like model backdooring, and investigate its application in fully decentralized settings.

\bibliographystyle{IEEEtran}
\bibliography{references}

\end{document}